\newtheorem{theorem}{Theorem}[section]
\newtheorem{lemma}{Corollary}[section]
\begin{document}

\title{Multimodal Fusion Refiner Networks}

\author{Sethuraman Sankaran, David Yang, Ser-Nam Lim\\
Facebook AI, New York, NY, USA \\
{\tt\small ssankaran@fb.com, dzyang@fb.com, sernamlim@fb.com}
}

\maketitle

\begin{abstract}
Tasks that rely on multi-modal information typically include a fusion module that combines information from different modalities. In this work, we develop a Refiner Fusion Network (ReFNet) that enables fusion modules to combine strong unimodal representation with strong multimodal representations. ReFNet combines the fusion network with a decoding/defusing module, which imposes a modality-centric responsibility condition. This approach addresses a big gap in existing multimodal fusion frameworks by ensuring that both unimodal and fused representations are strongly encoded in the latent fusion space. We demonstrate that the Refiner Fusion Network can improve upon performance of powerful baseline fusion modules such as multimodal transformers.  The refiner network enables inducing graphical representations of the fused embeddings in the latent space, which we prove under certain conditions and is supported by strong empirical results in the numerical experiments. These graph structures are further strengthened by combining the ReFNet with a Multi-Similarity contrastive loss function. The modular nature of Refiner Fusion Network lends itself to be combined with different fusion architectures easily, and in addition, the refiner step can be applied for pre-training on unlabeled datasets, thus leveraging unsupervised data towards improving performance. We demonstrate the power of Refiner Fusion Networks on three datasets, and further show that they can maintain performance with only a small fraction of labeled data.
\end{abstract}
\section{Introduction}
\begin{figure}[hbt!]
\includegraphics[scale=0.42]{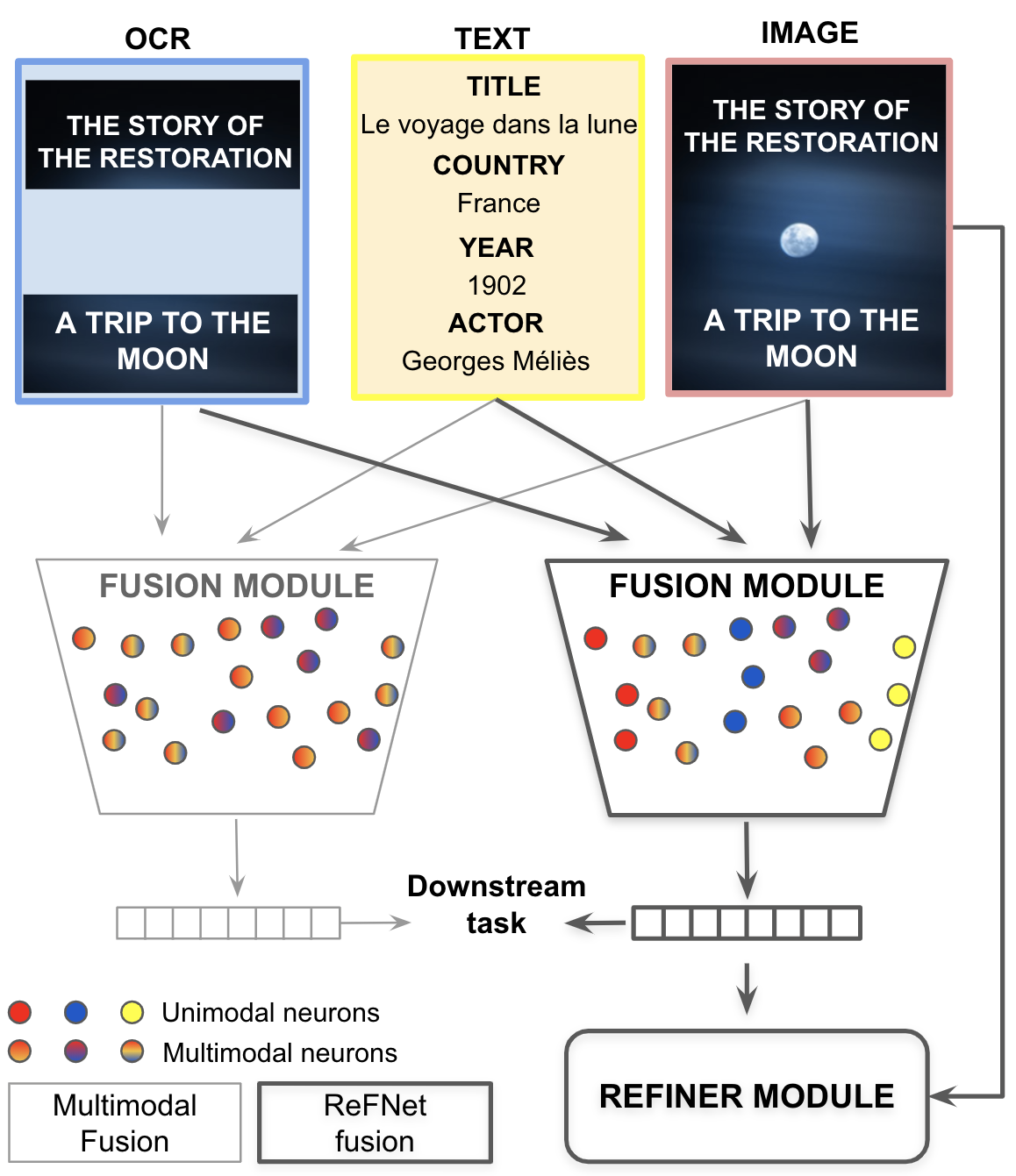}
\caption{Illustration of the idea behind refiner network that infuses a responsibility condition on the fusion architecture.}
\label{fig1}
\end{figure}
In several real-world applications, decision making involves integrating multiple modalities such as vision, text, auditory, and possibly even the content creator and how people engage with the input~\cite{ngiam2011multimodal, atrey2010multimodal, oviatt2003multimodal}. The application of multi-modal inference or decision making systems span several fields such as hate speech detection~\cite{gomez2020exploring}, misinformation detection~\cite{khattar2019mvae}, reasoning tasks~\cite{cadene2019murel},  etc. Multimodal modeling includes two broad steps: extraction of features from different modalities such as images and text, and fusion of the different modalities. Several choices are available for fusion, including late fusion, mid-fusion or early fusion~\cite{gadzicki2020early, minotto2014simultaneous}. Early fusion integrates features extracted from multiple modalities, and uses the integrated feature representation for learning downstream tasks. On the other hand, late fusion integrates classification scores of different features~\cite{poria2017review} to obtain the final classification score. Whether it is early or late fusion, different fusion strategies exist. These include Concat fusion~\cite{wang2020deep}, where different features are concatenated first and input into a MLP to get an integrated feature, as well as Set-based fusion~\cite{reiter2020deep} that is based on permutation invariant functions or Graph-based fusion modules~\cite{angelou2019graph}. \par

The approach in this paper is partly motivated by a limitation of current fusion strategies that predominantly encode multimodal information, ignoring potentially the importance of retaining unimodal signals. Even powerful multimodal architectures such as those with co-attention transformer layers~\cite{lu2019vilbert} across modalities, which attend from the visual stream over language stream, leverage such approaches for the final fusion. \par


We propose a complementary approach wherein we refine these fusion representations to optimally balance independent information that each modality carries with multimodal representations to inform the downstream task. To this end, we leverage notions of responsibility \cite{zhang2019fspool} and contrastive learning~\cite{khosla2020supervised} to build what we refer to as a Refiner Fusion Network (ReFNet). The main idea behind ReFNet is to balance the fusion module with a `responsibility' driven refiner module~\cite{huang2020better, zhang2019deep} as illustrated in Figure~\ref{fig1}. The refiner module drives creation of sets of artificial neurons preferentially tuned to specific modal inputs, while the fusion module drives creation of artificial neurons with mixed representations. The responsibility condition is akin to a regularizer that helps guard against over-fitting of the fusion module to the downstream task, and reduces vanishing gradients with respect to specific input fusion modes. By simultaneously balancing the number of unimodal and multimodal neurons during training, a better representation of the input modalities is achieved to perform multi-modal fusion informed by the downstream task. \par
One interesting finding of this work is that by imposing a responsibility condition in addition to the downstream task, the refiner module induces strong graphical structures between modalities, which we can prove under certain conditions. Such a latent inductive graph, when coupled with metric learning, encourages a latent graphical structure across modalities and samples, which we believe leads to a stronger fusion representation and performance on downstream tasks. More importantly, as opposed to transductive graph modeling ~\cite{liu2009robust}, an inductive graph avoids the burden of needing to ``carry'' an adjacency matrix during inference time, which can get very large in real-world use cases.

Our contributions can be summarized as follow:
\begin{enumerate}
    \item We propose ReFNet, a refiner module that can be added to any given fusion module that helps to induce neurons that are each responsible for a specific modality. We show that ReFNet can boost performance even over powerful transformers and can help induce latent graphical structures that we can show under certain conditions.
    \item When coupled with metric learning, which we call ReFNet\textsubscript{MS}, we observe a further boost in performance, and surmise from the T-SNE plots that we can generate stronger clustering and representation of the different classes. 
    \item Lastly, we demonstrate that ReFNet has increased level of tolerance to lesser amount of labeled data, which also helps reduce annotation needs.
\end{enumerate}

\section{Related Work}

{\bf Responsibility problem} The notion of responsibility was first introduced by Zhang and colleagues~\cite{zhang2019fspool}. The goal of this paper was that when permutation invariant sets are mapped to a latent space, the neurons of the encoding function must be faithful to the discontinuities introduced from the input space to the set entries. This was expanded into Set and Graph Refiner networks in~\cite{huang2020better} wherein an inner loop optimization was performed to divide inputs into set elements that satisfy the responsibility problem, in contrast to say splitting just the featurizer of the CNN of an image. This showed better performance in relational reasoning tasks, and this approach can be used with other permutation invariant architectures such as Deep Multimodal Sets~\cite{reiter2020deep}, \par

{\bf Multitask Learning} The idea behind multitask learning is to learn tasks in parallel but using a shared representation~\cite{caruana1997multitask}. A CentralNet architecture expanded this idea for multimodal fusion networks~\cite{vielzeuf2018centralnet, perez2019mfas, perez2019mfas}. Their approach was to create a central network that links modality specific networks. Each modality is allowed to make decisions independent of other modalities, while a central network aims to leverage the mixed modalities. Taskonomy~\cite{zamir2018taskonomy} builds the shared representation space by first learning several low level tasks that can be of generic value to several downstream tasks. ReFNet is designed to complement these approaches where the refiner operates on the shared representation space to decode back the unimodal signals. \par 


{\bf Graph based Fusion} In Graph based fusion modules, each input modality can be considered nodes of a graph with a known adjacency matrix~\cite{zhang2019graph, angelou2019graph} or explicitly modeling interaction across modalities~\cite{mai2020modality}. The GINFusion model~\cite{xu2018powerful} creates a representation of the graph in an embedding space using a dense graph connection. Adding ReFNet to the downstream loss pushes the fusion architecture to have strong unimodal and strong multimodal components, and to induce edges between modalities, when they exist (as we show later in the manuscript). \par 

{\bf Autoencoder} Autoencoders play a big role in unsupervised learning, transfer learning and dimensionality reduction~\cite{baldi2012autoencoders, burda2015importance, makhzani2015adversarial}. Set autoencoders can be used for dimensionality reduction of a set of features to a lower dimensional space~\cite{chen2018gsae}. Multi-modal autodecoders have been developed for filling in missing data~\cite{jaques2017multimodal}. A special case of the Refiner Network will be the cyclic loss function introduced in~\cite{zhu2017unpaired}. ReFNet enables encoding of feature sets that feed into a fusion module. \par 

{\bf Metric Learning} 
Supervised deep metric learning has been the focus of several research efforts~\cite{ml1,ml2,ml3}. Contrastive loss~\cite{hadsell2006dimensionality, hu2014discriminative} and triplet losses~\cite{schroff2015facenet, cheng2016person} are being widely used in several applications. In contrastive learning~\cite{khosla2020supervised, weinberger2009distance}, samples with similar labels are pulled together in the fusion embedding space while those with dissimilar labels are pulled apart. Contrastive loss has two ingredients for a given anchor: pool samples with similar labels (positive) and those with dissimilar labels (negative), and minimize distances in the embedding space for the former and maximize distances for the latter. Triplet loss uses anchors, where one positive and negative sample are chosen per anchor, which are typically the hardest examples for a given anchor. Other approaches include lifted structures~\cite{oh2016deep}, n-pair losses~\cite{sohn2016improved}, quadruplets~\cite{chen2017beyond}, angular loss~\cite{wang2017deep}, adapted triplet loss~\cite{yu2018correcting}, and multi-similarity loss functions~\cite{wang2019multi} that utilize pair-wise relations across samples in a batch. A recent work~\cite{musgrave2020metric} demonstrated that gains with metric learning are more modest compared to what is commonly reported. In this paper, we use the Multi-Similarity contrastive loss function in combination with the refiner network. We call this method ReFNet\textsubscript{MS}, wherein the responsible weights are trained in combination with maximizing separation of dissimilar embeddings in the fusion space, that can simultaneously elicit the underlying graphical structure across modalities and samples. \par

\begin{figure*}
\begin{center}
\includegraphics[width=0.92\linewidth]{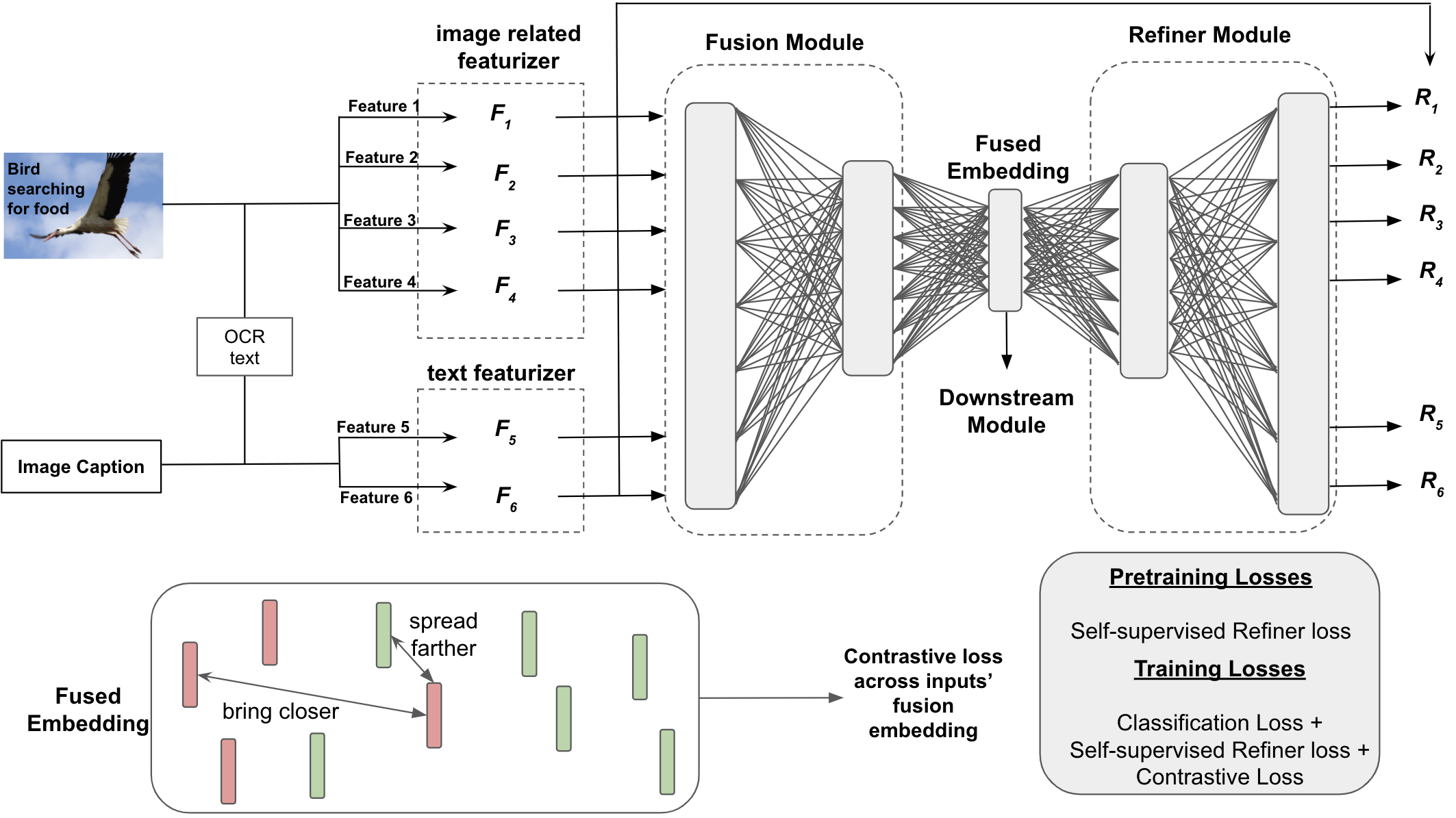}
\caption{Schematic of the proposed algorithm with a refiner and contrastive loss module. The image based features $F_1,F_2,F_3,F_4$ and text based features $F_5, F_6$ are fused together, and a refiner is applied on the fused embedding to generate refiner outputs $R_1, R_2, \cdots, R_6$ which are used to define a self-supervised loss function and a supervised Multi-similarity contrastive loss is also used across samples in a batch.}
\label{fig2}    
\end{center}
\end{figure*}
\section{Multimodal Refiner Fusion Network Design}
Let $F_1$, $F_2$, $\cdots$ $F_M$ refer to featurized inputs of $M$ modalities to a fusion module. A fusion module then aggregates these inputs, and creates a fused embedding of the multi-modal features as \par

\begin{equation}
{\text{F}}_{\text{emb}} = \mathcal{A}([F_1, F_2, \cdots F_M])
\end{equation}
where $\mathcal{A}$ maps the input features to an embedding space. In the context of this paper, $\mathcal{A}$ can encompass many of the fusion methods in literature such as Concat/MLP or other fusion modules used with Concat-Bert, ViLBERT, MMBT, etc. The fused embedding is subsequently used to train a downstream task such as classification. \par 

The refiner module that we introduce in this paper decomposes the fused embedding into set elements, and imposes a refiner loss that the decomposed set elements can capture the featurized inputs to the fusion module (Fig.~\ref{fig2}). 

\begin{equation}
     R_i = \mathcal{D}_i({\text{F}}_{\text{emb}}) \forall i = 1, 2, \cdots M 
\end{equation}
where $R_i$ are the set elements generated by the refiner module, $\mathcal{D}$. We then introduce a self-supervised loss function, $\mathcal{C}_{\text{i,ss}}$, for each fusion input
\begin{equation}
    \mathcal{C}_{\text{i,ss}} = 1 - \text{Cosine Similarity}(R_i, H_i(F_i)) \forall i = 1, 2, \cdots M
\label{selfsuper}
\end{equation}
where $H_i$ is a mapping of the features to the refiner space. In general $H_i$ can be the identity transformation unless the problem necessitates a lower dimensional refiner space than the feature space (e.g. when not sufficient training samples are present). 
The total refiner cost function is $\sum_{i=1}^{M} \gamma_i  \mathcal{C}_{\text{i,ss}}$ where $\gamma_i$ are the weights of the refiner cost function associated with the different features. \par 

\subsection{Refiner as a self-supervision module}
One advantage of the refiner module is that it is self-supervised and can leverage unlabeled data (refer Fig.~\ref{fig2}). Hence, unsupervised data, when available, can be pre-trained initially by the refiner module before training on the downstream task. This helps in reducing the amount of training labels required, especially when considering that labeling on multi-modal tasks is typically more expensive than in unimodal tasks such as classification.\par

\subsection{Refiner Fusion Module on Multi-modal transformers}
Refiner module can be applied on top any fusion architecture that takes as input feature streams from the multiple modalities that are finally fused together. These streams can be in the form of a set or a graph. Multi-modal transformers such as MMBT~\cite{kiela2019supervised}, Visual Bert~\cite{li2019visualbert} and ViLBERT~\cite{lu2019vilbert} have emerged as strong multi-modal models in the recent past. They combine the power of BERT model~\cite{devlin2018bert} for processing text, caption or OCR, with powerful ResNet models~\cite{he2016deep} to capture image features. In ViLBERT, a multi-modal co-attention model is used that has demonstrated powerful state-of-the-art performance on many public multi-modal benchmark tasks. \par 

In the rest of the paper, we apply ReFNet on top of ViLBERT architecture. The output post the co-attention layers of the text and visual streams are fused, and a decoder is applied to the fused embedding to decode back the text and visual embeddings using an MLP with hidden layers. \par 

\section{Inducing Latent Graph Structures}
While we do not explicitly model graph in this paper, we show in this section that our proposed method can exploit hidden graphical connections in the data, both across modes within a training sample and across samples in a batch as illustrated in Figure~\ref{fig3}. In Theorem~\ref{theorem:inductive_graph}, we show that when an (unknown) adjacency matrix, ${\bf A}$, exists that contains the connections across modalities and when the fusion network and refiner network are linear, then the inverse of the weights of the refiner network contains the weighted adjacency matrix. \par 



\begin{theorem}\label{theorem:inductive_graph}
Let ${\bf A}$ be an unknown adjacency matrix and ${\bf W}$ be the weights of an affine transformation to generate the fusion embedding, ${\bf E}$. The weights $\Gamma$ of a linear refiner satisfy the property, $\Gamma {\bf W} {\bf A} = {\bf I}_{m}$. \par 
\end{theorem}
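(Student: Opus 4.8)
The plan is to make the linear model explicit and then read the claimed identity off the vanishing refiner loss. First I would fix the graph-based fusion in its linear form: stack the featurized inputs $F_1,\dots,F_M$ into a single vector $\mathbf{F}$, treat the modalities as nodes whose features are mixed through the (unknown) adjacency matrix $\mathbf{A}$, and let the affine fusion map with weights $\mathbf{W}$ act on the aggregated features. This yields the fused embedding $\mathbf{E} = \mathbf{W}\mathbf{A}\mathbf{F}$, where any bias is suppressed since it does not affect the linear part (it can be carried in homogeneous coordinates). Because the refiner $\mathcal{D}$ is assumed linear, its action on the embedding is just $\Gamma\mathbf{E}$.

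Next I would invoke the refiner objective. At a minimizer of the self-supervised cost $\sum_{i=1}^{M}\gamma_i\,\mathcal{C}_{\text{i,ss}}$ of Eq.~(\ref{selfsuper}), every decoded set element agrees with its identity-mapped input ($H_i = \mathrm{Id}$), so the refiner reconstructs the stacked features exactly: $\Gamma\mathbf{E} = \mathbf{F}$. Substituting the fusion expression gives $\Gamma\mathbf{W}\mathbf{A}\,\mathbf{F} = \mathbf{F}$.

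The final step promotes this per-sample equality to a matrix identity. Since reconstruction holds across the training distribution and the observed feature vectors are assumed to span the ambient space, the operator $\Gamma\mathbf{W}\mathbf{A}$ coincides with the identity on a spanning set, hence $\Gamma\mathbf{W}\mathbf{A} = \mathbf{I}_{m}$. When $\Gamma$ is invertible this reads $\Gamma^{-1} = \mathbf{W}\mathbf{A}$, recovering the interpretation advertised above that the inverse of the refiner weights contains the weighted adjacency matrix, so that $\mathbf{A}$ can be extracted once $\mathbf{W}$ is known.

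I expect the main obstacle to be the gap between a vanishing loss and exact reconstruction $R_i = F_i$. The term in Eq.~(\ref{selfsuper}) is a cosine similarity, which is invariant under positive rescaling, so a vanishing loss only forces $R_i$ parallel to $F_i$; the argument then yields $\Gamma\mathbf{W}\mathbf{A} = \Lambda$ for some positive diagonal $\Lambda$ rather than exactly $\mathbf{I}_{m}$. Closing this gap requires either normalizing the features and decoder outputs or absorbing $\Lambda$ into a rescaling of $\Gamma$. The spanning assumption on $\mathbf{F}$, needed to pass from the per-sample identity to the operator identity, and the invertibility of $\Gamma$, needed to write $\Gamma^{-1} = \mathbf{W}\mathbf{A}$, are the remaining conditions I would state explicitly.
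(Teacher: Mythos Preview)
Your proposal is correct and follows essentially the same route as the paper: write the linear fusion as $\mathbf{E}=\mathbf{W}\mathbf{A}\mathbf{F}$, apply the linear refiner to get $\Gamma\mathbf{W}\mathbf{A}\mathbf{F}=\mathbf{F}$, and conclude $\Gamma\mathbf{W}\mathbf{A}=\mathbf{I}_m$ from universality over $\mathbf{F}$. If anything, you are more careful than the paper, which simply asserts ``since the above holds $\forall\,\mathbf{F}$'' without stating a spanning condition and which does not address the scale ambiguity inherent in the cosine-similarity loss; your observation that a vanishing loss only pins down $\Gamma\mathbf{W}\mathbf{A}$ up to a positive diagonal is a genuine point the paper's proof elides.
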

\begin{proof}
Let ${\bf F}^{m \times d}$ represent features where $m$ is the number of modalities and $d$ is the size of each feature vector. We assume that each modality has a feature vector of dimension $d$ without any loss of generality (otherwise they can be padded with zeros). \par 

The unknown adjacency matrix, ${\bf A}^{m \times m}$ contains ones whenever modality $i$ and $j$ have an edge between them. Let ${\bf W}^{k \times m}$ be the unknown coefficients that create the fusion embeddings, ${\bf E}^{k \times d}$ from the features as shown in Equation~\ref{fuse_embed}.
\begin{equation}
    {\bf E} = {\bf W} {\bf A} {\bf F}
    \label{fuse_embed}
\end{equation}
where ${\bf E}$ is the embedding in a $k \times d$ space. The fusion module generates weights, ${\bf W}^{*} = {\bf W} {\bf A}$. The refiner calculates weights $\Gamma^{m \times k}$ such that $\tilde{\bf F} = \Gamma {\bf E}  = \Gamma {\bf W} {\bf A} {\bf F}$. Since the refiner network finds weights such that $\tilde{\bf F} = {\bf F}$ (refer Eq.~\ref{selfsuper} where ${\mathcal D}({\bf x}) = \Gamma {\bf x}$ and $H_i$ is identity),  we have
\begin{equation}
    {\bf F} =  \Gamma {\bf W} {\bf A} {\bf F}. \nonumber 
\end{equation}
Since the above holds $\forall {\bf F}$, $\Gamma {\bf W}{\bf A} = {\bf I}_{m}$ where ${\bf I}_m$ is an identify matrix of size $m \times m$.

If ${\bf W}{\bf A}$ is invertible, then $\Gamma = ({\bf W}{\bf A} )^{-1}$. Without using refiner, the weights, ${\bf W}^*$ will be tuned by a downstream task of much lower dimension than the refiner module, and therefore the weights will be tuned to generate a good representation of the graph in a much lower dimensional space, thereby failing to induce the latent graphical structure. 
\label{theorem_fin}
\end{proof}
\vspace{-0.15in}
\begin{lemma}
When $k=m$ and ${\bf W}$ has a rank equal to $m$, $\Gamma^{-1}$ is the weighted adjacency matrix, ${\bf W}{\bf A}$. 
\end{lemma}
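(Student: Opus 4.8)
The plan is to specialize the dimensions in Theorem~\ref{theorem:inductive_graph} to the square case and then upgrade the one-sided inverse established there into a genuine two-sided inverse. When $k=m$, every matrix in play becomes $m \times m$: the refiner weights $\Gamma$ are $m \times k = m \times m$, the fusion weights ${\bf W}$ are $k \times m = m \times m$, and the product ${\bf W}{\bf A}$ is $m \times m$. Hence the identity $\Gamma {\bf W}{\bf A} = {\bf I}_m$ supplied by the theorem now relates two \emph{square} matrices, which is the structural fact the whole argument turns on.

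First I would record that the hypothesis $\mathrm{rank}({\bf W}) = m$ together with $k=m$ makes ${\bf W}$ invertible. Next I would read the theorem's identity $\Gamma ({\bf W}{\bf A}) = {\bf I}_m$ as the statement that $\Gamma$ is a left inverse of the square matrix ${\bf W}{\bf A}$. The central linear-algebra fact to invoke is that a square matrix possessing a one-sided inverse is invertible, with that one-sided inverse equal to its unique two-sided inverse. Applying this gives that ${\bf W}{\bf A}$ is invertible and $\Gamma = ({\bf W}{\bf A})^{-1}$; inverting both sides yields $\Gamma^{-1} = {\bf W}{\bf A}$, which is exactly the claim, since ${\bf W}{\bf A} = {\bf W}^{*}$ is the weighted adjacency matrix produced by the fusion module in Equation~\ref{fuse_embed}.

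The step I expect to be the main obstacle is justifying that $\Gamma$ (equivalently ${\bf W}{\bf A}$) is genuinely invertible rather than merely a left inverse, since in the general theorem $\Gamma$ is $m \times k$ with possibly $k > m$ and carries no invertibility on its own. The restriction $k=m$ is precisely what promotes the left inverse to a full inverse, so I would make that dependence explicit. I would also be careful about the rank bookkeeping: for $m \times m$ matrices one has $\mathrm{rank}({\bf W}{\bf A}) \le \min\{\mathrm{rank}({\bf W}), \mathrm{rank}({\bf A})\}$, so the existence of the left inverse $\Gamma$ already forces both ${\bf W}$ and the adjacency matrix ${\bf A}$ to have full rank $m$. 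The stated hypothesis on ${\bf W}$ makes half of this explicit, and I would note that ${\bf A}$ inherits full rank through the identity, guaranteeing the product is nonsingular and the final inversion is legitimate.
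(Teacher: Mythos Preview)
Your argument is correct and follows the same route as the paper: specialize to $k=m$, use the theorem's identity $\Gamma\,{\bf W}{\bf A}={\bf I}_m$ for square matrices, and conclude that $\Gamma$ is the two-sided inverse of ${\bf W}{\bf A}$. If anything you are more careful than the paper, which simply asserts that ${\bf W}{\bf A}$ has rank $m$ without noting (as you do) that this is forced by the left-inverse identity itself rather than by the stated hypothesis on ${\bf W}$ alone.
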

\noindent The above follows directly from Equation~\ref{theorem_fin} because ${\bf W} {\bf A}$ has a rank of $m$.

\begin{figure}[]
\includegraphics[scale=0.42]{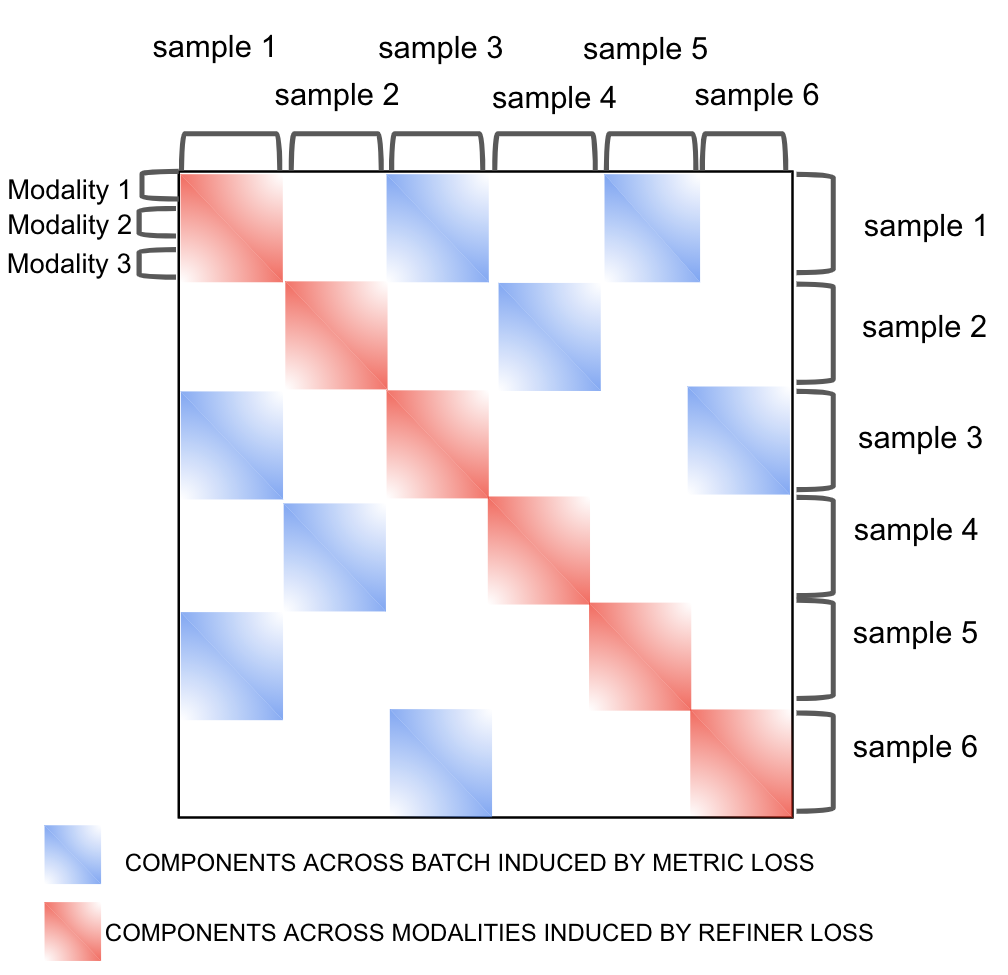}
\caption{Illustration of the connections that the refiner induces (inter-modality) and those that the metric learning induces (across samples).}
\label{fig3}
\end{figure}

\subsection{Contrastive Loss}
When the self-supervised refiner module is combined with a contrastive loss, weights are driven to induce graphical structures both within modalities and across samples. Without refiner, metric learning maximizes separation of fused embeddings based on the downstream classification task. But the addition of refiner to metric learning enables separation also across modalities (because the refiner is responsible) as we demonstrate with a T-SNE figure later in the manuscript. We use the Multi-similarity loss in this paper, though other contrastive loss functions can be used. The Multi-similarity loss for $T$ training samples in a batch is calculated as~\cite{wang2019multi}
\begin{eqnarray}
    {\mathcal L}_{MS} = \frac{1}{T} \sum_{i=1}^{T} { \left[ \frac{1}{\alpha} log[1+\sum e^{-\alpha(S_{ik}-\lambda)}]\right]} + \nonumber \\ { \left[ \frac{1}{\beta}log[1 + \sum e^{\beta(S_{ik}-\lambda)}] \right]}
\end{eqnarray}
where $S_{ij}$ is the similarity (dot product) between samples and $\alpha$, $\beta$ and $\lambda$ are hyperparameters. We provide the overall algorithm in Alg.~\ref{alg:refnet} where $\gamma_i$ are the self-supervised loss coefficients, $\zeta$ is the loss coefficient for contrastive loss, $w_k$ are the weights of the fusion and refiner networks and $\eta$ is the learning rate. \par 



\begin{algorithm}
\caption{Algorithm for training Multi-modal Fusion Networks}
\vspace{0.1in}
\begin{algorithmic}\label{alg:refnet}
\STATE \underline{\bf Pretraining (optional)}
\vspace{0.1in}
\STATE Compute features, $F_1, F_2, \cdots, F_M$.
\WHILE{${\text epoch} \le \text{max epochs}$}
\STATE ${\mathcal L}_{\text pretrain} = \sum \gamma_i {\mathcal C}_{i,ss}$
\STATE $w_{k+1} = w_{k} - \eta \frac{\partial {\mathcal L}_{\text pretrain}}{\partial w}$
\ENDWHILE \\
\vspace{0.1in}
\STATE \underline{\bf Training}
\vspace{0.1in}
\WHILE{${\text epoch} \le \text{max epochs}$ and stop criterion is not met}
\STATE ${\text{F}}_{\text{emb}} = \mathcal{A}([F_1, F_2, \cdots F_M])$
\STATE $R_i = \mathcal{D}_i({\text{F}}_{\text{emb}}) \forall i = 1, 2, \cdots M $
\STATE $\mathcal{C}_{\text{i,ss}} = 1 - \text{Cosine Similarity}(R_i, H_i({\mathcal F}_i)$
\STATE ${\mathcal L}_{\text train} = {\mathcal L}_{downstream} + \sum \gamma_i {\mathcal C}_{i,ss} + \zeta * {\mathcal L}_{MS}$
\STATE $w_{k+1} \gets w_{k} - \eta \frac{\partial {\mathcal L}_{\text train}}{\partial w}$
\ENDWHILE
\end{algorithmic}
\end{algorithm}
\section{Experiments}
For this study, we use the pre-trained ViLBERT model~\cite{lu2019vilbert} as a baseline for training and testing ReFNet and ReFNet\textsubscript{MS}. Similar to~\cite{kiela2020hateful}, the ViLBert model was only unimodally pretrained. Instead of using the element-wise product~\cite{lu2019vilbert} to fuse the unimodal representations of images and texts, we concatenate them and feed them into a new linear layer to formulate the overall representation. We are able to reproduce the baselines reported for each of examples in this section. We feed the fused embedding to two new linear layers. The Multi-similarity loss function is integrated with the fused embedding, and included during the final training runs. We used the MMF framework~\cite{singh2020mmf} built on PyTorch~\cite{paszke2019pytorch} for setting up the training and evaluation pipelines. 

\subsection{MM-IMDB}
The multi-modal IMDB dataset~\cite{arevalo2017gated} contains 25,959 movies and their poster, genre, plot and other metadata fields such as year, language, writer, director, and aspect ratio.  The goal is to classify the movie into 24 categories. Each movie can contain more than one class. The micro-f1 and macro-f1 scores were used to evaluate performance. The original dataset contains a baseline using Gated Multimodal units~\cite{arevalo2017gated}. The ViLBERT baseline~\cite{singh2020we} is used here which improved upon the Gated Multimodal methods, and is compared with ReFNet and ReFNet\textsubscript{MS}. Since each movie can simultaneously have multiple classes present, the precision and recall scores are calculated based on the f-score as follows~\cite{madjarov2012extensive}. \par

The macro f1 score is calculated from the precision, $p_j$ and recall $r_j$ of each class as 
\begin{equation}
    f_1^{\text{macro}} = \frac{1}{N} \sum_{i = 1}^{N} \frac{2 \times p_j \times r_j}{p_j + r_j}. \nonumber 
\end{equation}
The micro f1 score is calculated using all the class labels together as 
\begin{equation}
    f_1^{\text{micro}} = \frac{2 \times p^{\text{micro}} \times r^{\text{micro}}}{p^{micro} + r^{micro}} \nonumber 
\end{equation}
where $p^{micro}$ and $r^{micro}$ are the precision and recall across all classes calculated based on the total number of true positives, false positives and false negatives. \par 

We used the AdamW optimizer, with a Cosine warmup and Cosine decay learning rate scheduler. The value of $\epsilon$ for AdamW optimizer was set to $1e^{-8}$ with corresponding $\beta_1$ and $\beta_2$ as 0.9 and 0.999. The batch size was set to 32, learning rate was set to $5e^-5$ and the fused embedding dimension was set to $512$. An MLP with a hidden layer was used for the decoding refiner module. For the metric loss function, values of $\alpha=50$ and  $\beta=2$ were chosen. We used values of $\eta_1 = 0.1, \eta_2 = 0.1, \zeta = 0.1$.
\par

\subsection{Hateful Memes}
Hateful Memes dataset~\cite{kiela2020hateful} contains over 10,000 multi-modal examples (image and test) with the goal of detecting if an input is hateful or not. The dataset is constructed such that unimodal models struggle and only multi-modal models can succeed (see Fig.~\ref{fig4}). Difficult examples (“benign confounders”) are added to the dataset to make it hard to rely on unimodal signals. The dataset comprises of five different types of memes: multimodal hate, where benign confounders were found for both modalities, unimodal hate where one or both modalities were already hateful on their own, benign image and benign text confounders and finally random not-hateful examples. There were 1,000 samples in the validation dataset and 2,000 examples in the test dataset. We used values of $\eta_i = 0.1, \zeta = 0.25$.

\begin{figure}
\includegraphics[scale=0.4, clip]{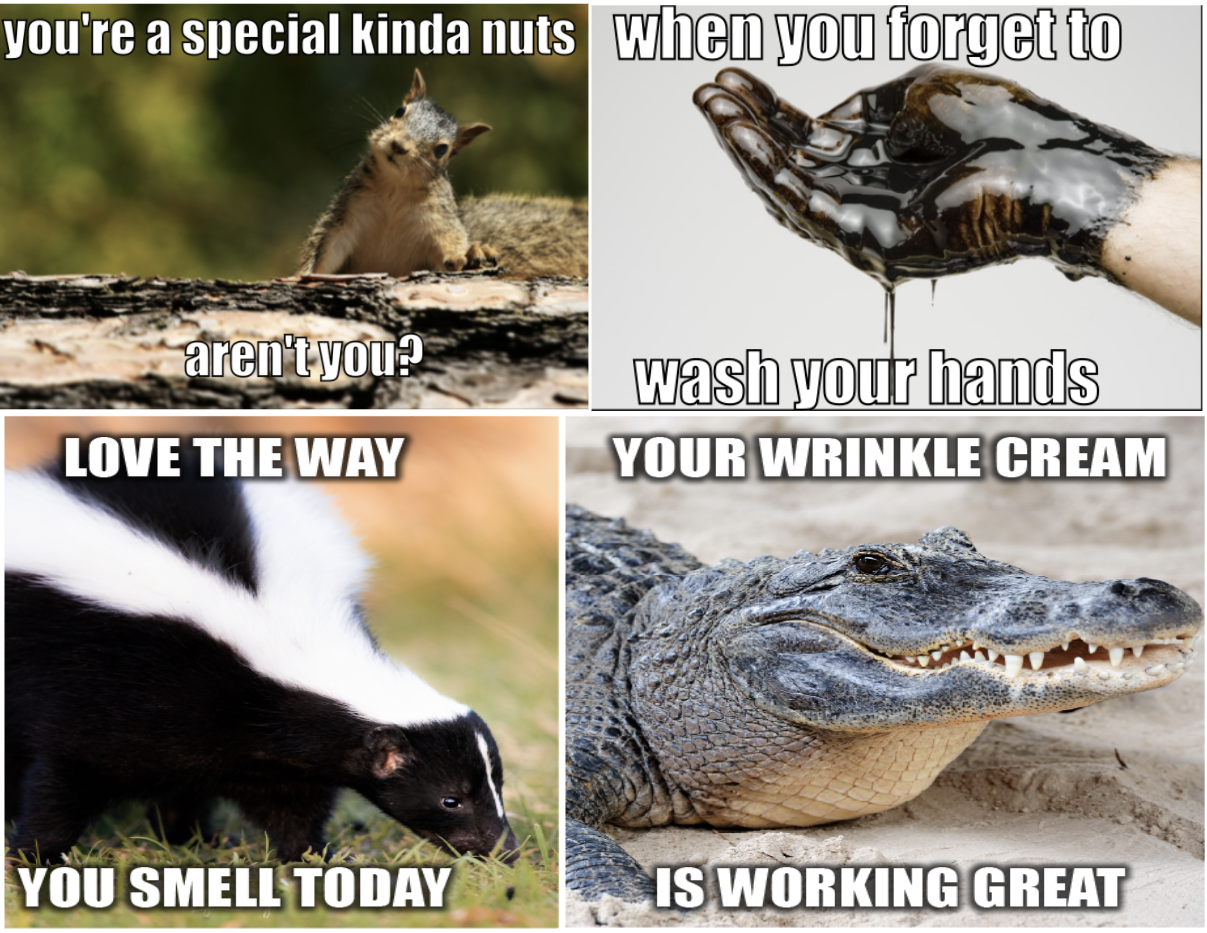}
\caption{A few examples of memes that may be considered benign (top) or mean (bottom).}
\label{fig4}
\end{figure}

We use a cross entropy loss for the two-label classification. We train on 8 Navidia Volt100 GPUs with a total batch size of 32 for a total of 22000 updates. We evaluate every 1000 updates and save the model with the best AUROC metric on the validation set. We use the AdamW optimizer with an initial learning rate of 1.0e-05. We use a linear decay learning rate schedule with warm up to train the model.

\subsection {SNLI Visual Entailment}
The SNLI Visual Entailment (SNLI-VE) dataset ~\cite{xie2019visual} consists of image-sentence pairs whereby a real world image premise and a natural language hypothesis are given. The goal is to determine if the natural language hypothesis can be concluded given the information provided by the image premise. Three labels, entailment (hypothesis is true), neutral or contradiction (hypothesis is false), are assigned to image-sentence pairs. The dataset has 550k image-sentence pairs generated based on the Stanford Natural Language Inference (SNLI)~\cite{bowman2015large} corpus and Flickr30k~\cite{plummer2016flickr30k} dataset. The training setup is the same as Hateful Memes dataset except we use a batch size of 480, an initial learning rate of 5.0e-05, and a total of 10000 updates. We used values of $\eta_i = 0.1, \zeta = 0.05$.

\section{Results}
\subsection{MM-IMDB}
Table~\ref{mmimdb_result} compares the performance of ReFNet and ReFNet\textsubscript{MS} on the test set of MM-IMDB dataset to a ViLBERT baseline model in combination with different pretrained models. In general, pretraining ViLBERT reduced the performance on MM-IMDB dataset, but ReFNet was able to improve upon the performance. A relative gain of 0.8\% in the micro f1 score and 0.82\% in the macro f1 score were observed. Compared to other pretraining modules, ReFNet had gains between 0.48\% and 2.12 \%. The improvement on micro and macro f1 score using ReFNet\textsubscript{MS} were statistically significant based on a t-test (p = 0.02 and 0.03 respectively).\par
\begin{table}
\begin{tabular}{|c|c|c|}
\hline 
model & macro f1 test. & micro f1 test. \\
\hline
\hline
\small{ViLBERT} & 58.48 $\pm$ 0.25 & 66.77 $\pm$ 0.14 \\
\small{ViLBERT-VQA2} & 57.70 & 66.42  \\
\small{ViLBERT-COCO} & 57.72 & 65.63 \\
\small{ViLBERT-cc small} & 58.20 & 66.70 \\
\small{ReFNet} & 58.75 $\pm$ 0.07  & 67.02 $\pm$ 0.15 \\
\small{ReFNet\textsubscript{MS}} & 58.96 $\pm$ 0.09 & 67.31 $\pm$ 0.19 \\
\hline

\end{tabular}
\caption{Comparison of the macro and micro f1 score on the test set across ViLBERT in combination with different pretraining datasets~\cite{singh2020we}.}
\label{mmimdb_result}
\end{table}

\subsection{Hateful Memes}
Table~\ref{hatefulmeme_result} compares the performance of ReFNet and ReFNet\textsubscript{MS} with respect to the ViLBERT baseline. ReFNet improves the accuracy by 3.30\% with a relative gain in the AUC Of 1.84\%. The use of ReFNet\textsubscript{MS} improves the Accuracy further by 0.80\%. The overall relative gain in AUC was 2.17\%. Based on a t-test, both ReFNet and ReFNet\textsubscript{MS} had a statistically significant improvement on the AUC (p-value = 1e-4 and 0.006 respectively) and the accuracy (p-value $<$ 1e-6 for both) on the test set.
\begin{table*}[!t]
\centering 
\begin{tabular}{|c|c|c|c|c|}
\hline 
model & Acc. val. & AUC val. & Acc. test. & AUC test. \\
\hline
\hline
\small{base} & 60.71 $\pm 0.29$ & 70.62 $\pm 0.42$ & 59.70 $\pm 0.20$ & 70.53 $\pm 0.07$ \\
\small{ReFNet} & 62.45 $\pm$ 1.09 & 70.87 $\pm$ 0.41 & 63.00 $\pm$ 0.31  & 71.83 $\pm$ 0.13\\
\small{ReFNet\textsubscript{MS}} & 63.29 $\pm$ 1.31 & 70.99 $\pm$ 0.37 & 63.80 $\pm$ 0.36 & 72.06 $\pm$ 0.49\\
\hline
\end{tabular}
\caption{Comparison of the Accuracy and AUC on both the validation and test sets for the Hateful Memes dataset. "base" refers to the baseline ViLBERT model.}
\label{hatefulmeme_result}
\end{table*}

\subsection{SNLI Visual Entailment}
ReFNet showed a small relative improvement on the test set (improvement in accuracy $0.1\% \pm 0.07$ on the test set which was not significant). However, ReFNet\textsubscript{MS} improves the Accuracy relatively by $0.71\%$ which was significant (p-value = 0.001). Since the dataset contains more than a hundred thousand examples, even a 1\% improvement results in thousands of images being correctly classified. 


\begin{figure}[hbt!]
\includegraphics[width=\columnwidth]{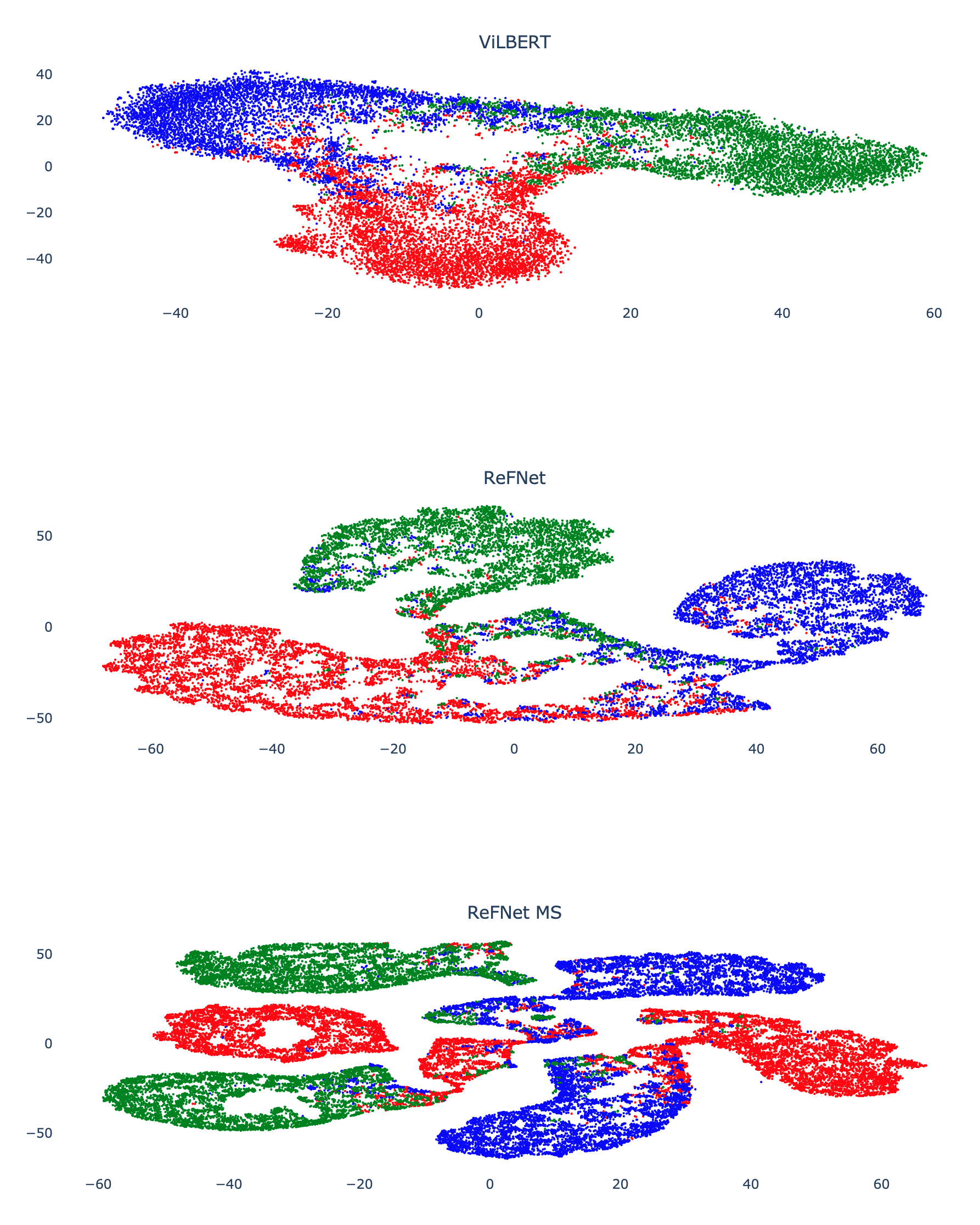}
\caption{Visualization of fusion features in reduced dimensions using T-SNE with perplexity set to 100. Top: fusion features of ViLBERT baseline showing the 3 clusters with entanglements. Middle: fusion features of ReFNet showing the 3 clusters are better separated with less entanglements. Bottom: fusion features of RefNet\textsubscript{MS} showing the Refiner and Contrastive loss inducing six clusters across modalities and classes (three clusters for each of the vision and text modalities). The colors red, blue, and green represent three classes contradiction, entailment, and neutral.}
\label{fig:fusion_figure}
\end{figure}

\section{Ablation Studies}
For the ablation study for MM-IMDB, we successively chose a fraction of the labeled dataset (5, 10 and 20\%) for downstream classification while the self-supervised refiner module leverages unlabeled data. The baselines were rerun using the ViLBERT model. Metrics were reported on the test dataset based on the model corresponding to the best validation performance on 20000 iterations. ReFNet was able to achieve a 4.03\% higher micro f1 and 11.83\% higher macro f1 score compared to the ViLBERT baseline. ReFNet\textsubscript{MS} boosted the performance over baseline to 6.51\% micro f1 score and 13.77\% macro f1 score. These were 1.66 and 5.00 when 10\% of the labeled data was used and 0.88 and 1.04 when 20\% of the labeled data was used. A full summary of the ablation study is provided in Table~\ref{mmimdb_ablation} and illustrated in Fig.~\ref{fig6}.\par 
\begin{figure*}
\includegraphics[scale=0.43]{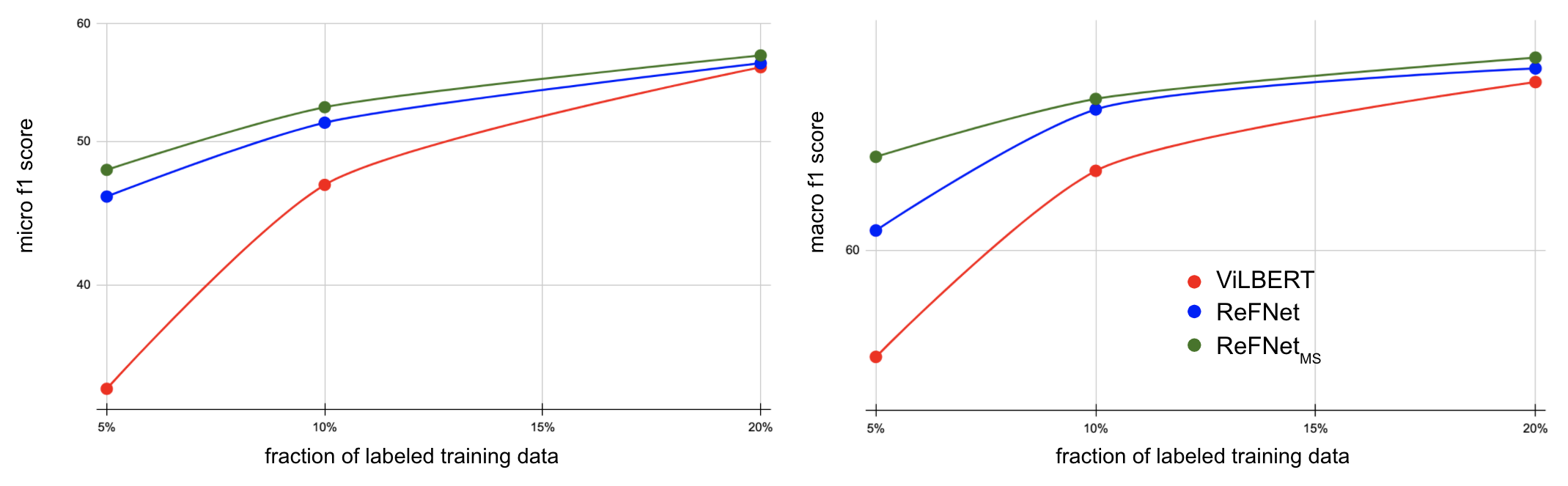}
\caption{Comparison of performance of the baseline ViLBERT model, ReFNet and ReFNet\textsubscript{MS} across datasets with 5\%, 10\% and 20\% of labeled training data available. Figure on the left shows micro and macro f1 scores on the validation dataset and that on the right shows the scores on the test dataset.}
\label{fig6}
\end{figure*}
 
\begin{table}[ht]
\begin{tabular}{|c|c|c|c|}
\hline 
model & frac labeled & macro f1 test & micro f1 test \\
\hline
\hline
\small{baseline} & 5\% & 34.06 & 56.62 \\
\small{ReFNet} & 5\% & 45.89 & 60.65 \\
\small{ReFNet\textsubscript{MS}} & 5\% & 47.83 & 63.13 \\
\hline
\small{baseline} & 10\% & 46.73 & 62.65\\
\small{ReFNet} & 10\% & 48.70 & 63.41 \\
\small{ReFNet\textsubscript{MS}} & 10\% & 51.73 & 64.31 \\
\hline
\hline
\small{baseline} & 20\% & 56.08 & 65.75 \\
\small{ReFNet} &  20\%  & 56.43 & 66.24 \\
\small{ReFNet\textsubscript{MS}} &  20\%  & 57.12 & 66.63 \\
\hline
\end{tabular}
\caption{Ablation study on the MM-IMDB dataset. Fraction labeled is the fraction of labeled samples used during training for the logit binary cross entropy function.}
\label{mmimdb_ablation}
\end{table}

On the Hateful Memes dataset, the results based on successive reduction in the fraction of labeled data are summarized in table~\ref{hatefulablation}. Using just 5\% of the labeled data, the area under the ROC curve on the test set improves by 2.52\% and 3.84\% using ReFNet and ReFNet\textsubscript{MS} respectively. The corresponding improvements in accuracy are 4.14\% and 4.60\%. When using 10\% of the labeled dataset, the accuracy improved by 1.50\% and the AUC improved by 0.50\%, and when using 20\% of the labeled dataset, these were 3.50\% and 2.59\% respectively. \par

\begin{table*}
\centering
\begin{tabular}{|c|c|c|c|c|c|c|c|c|c|c|c|c|}
\hline 
& 5\% V & 5\% V & 5\% T & 5\% T &
10\% V & 10\% V & 10\% T & 10\% T &
20\% V & 20\% V & 20\% T & 20\% T \\ \hline 
model & Acc & AUC & Acc & AUC &
Acc & AUC & Acc & AUC & 
Acc & AUC & Acc & AUC \\
\hline
\hline
\small{baseline} & 55.56 & 59.4 & 52.8 & 57.12 & 
56.05 & 60.32 & 54.45 & 60.28 &
61.08 & 50.20 & 50.2 & 61.31 \\
\small{ReFNet} & 56.20 & 58.65 & 56.94 & 59.64 & 
56.15 & 61.84 & 54.65 & 60.87 &
54.96 & 61.95 & 53.70 & 63.90  \\
\small{ReFNet\textsubscript{MS}} & 57.34 & 62.02 & 57.40 & 60.96 & 
55.65 & 61.82 & 54.25 & 60.78 &
58.13 & 62.78 & 58.30 & 62.86 \\
\hline
\end{tabular}
\caption{Ablation study on the Hateful Meme dataset. V refers to validation set, T refers to the test set, and 5\%, 10\% and 20\% refer to the corresponding fraction of labeled training data used. AUC is the area under the ROC curve and ACC is the accuracy of the predicted model. All the reported values correspond to the model with the best performance on the validation dataset.}
\label{hatefulablation}
\end{table*}

\section{Discussion}

We started with the hypothesis that imposing a responsibility condition on Multimodal fusion architectures can improve performance on downstream tasks. We imposed the responsibility condition using a refiner module to decode each feature feeding into the fusion. This self supervised loss function for each input feature used the cosine similarity between the decoded and original fusion features. We demonstrated this by integrating refiner on a strong transformer based multimodal baseline - the ViLBERT model. We showed that imposing responsibility can result in relative gains of over 5.53\% on the accuracy and 1.84\% on the Hateful memes dataset. \par 

Reducing the amount of labeled data available showed that the self-supervised ReFNet can effectively leverage unlabeled data and maintain performance much better than the baseline ViLBERT model. For instance, with 5\% of labeled data available for MM-IMDB, ReFNet had an improved micro f1 performance of 4.05, and adding metric loss (ReFNet\textsubscript{MS}) improved it further by 2.48, yielding a net gain score of 6.53. The macro-f1 score had a net gain of 13.77. These demonstrate how well the responsibility module combined with the metric loss function could perform compared to baseline transformer models for multi-modal classification. This is also beneficial for the purposes of multi-modal labeling of datasets. In general, labeling for multi-modal problems such as misinformation can be significantly more expensive compared to classification tasks. Leveraging ReFNet with a baseline model such as ViLBERT can help reduce the amount of labeling required to attain the same performance. \par 

Choosing the refiner space to be the actual input (e.g. image pixels) rather than the input to the fusion module resulted in a reduction in performance and was not considered to be used with the ViLBERT architecture. This is likely because the co-attention mechanism that ViLBERT models is nullified by imposing responsibility prior to applying the co-attention. However, this might provide a richer option for other architectures, especially when significantly more supervised data is available (since the input space is typically of a larger dimension than the input to the fusion module). \par 

Metric Learning methods have shown strong improvements for multi-class classification problems. When integrated with the self-supervised Refiner network, the metric Refiner network is able to elicit representations of the fused embeddings in a responsible setting, (i.e.) derive distance metrics in the embedding space characterized by strong unimodal and mixed representations of the input features. In order to demonstrate how the refiner network enables inducing graphical representations of the fused embedding in the latent space, we generated T-SNE plots using the fusion features with  ViLBERT, ReFNet and ReFNet\textsubscript{MS} algorithms. Fig.~\ref{fig:fusion_figure} shows that ViLBERT model generates 3 vaguely separated fusion feature clusters due to their entanglements while the clusters generated by ReFNet has a better separation between clusters and clearly delineates them. The metric learning algorithm is able to further separate the clusters across modalities, therefore we observe six clusters, three different classes for each modality (vision and language). This responsibility property can mitigate the problem of vanishing gradient by driving the weights to have some unimodal adherence. In addition, we can obtain performance boost when a few modalities dominate performance of the downstream task, and the fusion module is unable to encode the other modalities without the existence of a high dimensional regularizer such as the Refiner module. \par 
{\bf Limitations and Future Work:} Based on some initial experiments, we chose the Multi-Similarity loss function but did not really explore the full space of Metric Loss functions. Further, we did not showcase examples containing multi-modal problems with strong transductive graph baselines.   \par 
{\small
\bibliographystyle{ieee_fullname}
\bibliographystyle{unsrt}

\bibliography{egbib}

\begin{thebibliography}{10}\itemsep=-1pt

\bibitem{angelou2019graph}
Michalis Angelou, Vassilis Solachidis, Nicholas Vretos, and Petros Daras.
\newblock Graph-based multimodal fusion with metric learning for multimodal
  classification.
\newblock {\em Pattern Recognition}, 95:296--307, 2019.

\bibitem{arevalo2017gated}
John Arevalo, Thamar Solorio, Manuel Montes-y G{\'o}mez, and Fabio~A
  Gonz{\'a}lez.
\newblock Gated multimodal units for information fusion.
\newblock {\em arXiv preprint arXiv:1702.01992}, 2017.

\bibitem{atrey2010multimodal}
Pradeep~K Atrey, M~Anwar Hossain, Abdulmotaleb El~Saddik, and Mohan~S
  Kankanhalli.
\newblock Multimodal fusion for multimedia analysis: a survey.
\newblock {\em Multimedia systems}, 16(6):345--379, 2010.

\bibitem{baldi2012autoencoders}
Pierre Baldi.
\newblock Autoencoders, unsupervised learning, and deep architectures.
\newblock In {\em Proceedings of ICML workshop on unsupervised and transfer
  learning}, pages 37--49. JMLR Workshop and Conference Proceedings, 2012.

\bibitem{bowman2015large}
Samuel~R. Bowman, Gabor Angeli, Christopher Potts, and Christopher~D. Manning.
\newblock A large annotated corpus for learning natural language inference,
  2015.

\bibitem{burda2015importance}
Yuri Burda, Roger Grosse, and Ruslan Salakhutdinov.
\newblock Importance weighted autoencoders.
\newblock {\em arXiv preprint arXiv:1509.00519}, 2015.

\bibitem{cadene2019murel}
Remi Cadene, Hedi Ben-Younes, Matthieu Cord, and Nicolas Thome.
\newblock Murel: Multimodal relational reasoning for visual question answering.
\newblock In {\em Proceedings of the IEEE/CVF Conference on Computer Vision and
  Pattern Recognition}, pages 1989--1998, 2019.

\bibitem{caruana1997multitask}
Rich Caruana.
\newblock Multitask learning.
\newblock {\em Machine learning}, 28(1):41--75, 1997.

\bibitem{chen2018gsae}
Hung-I~Harry Chen, Yu-Chiao Chiu, Tinghe Zhang, Songyao Zhang, Yufei Huang, and
  Yidong Chen.
\newblock Gsae: an autoencoder with embedded gene-set nodes for genomics
  functional characterization.
\newblock {\em BMC systems biology}, 12(8):45--57, 2018.

\bibitem{chen2017beyond}
Weihua Chen, Xiaotang Chen, Jianguo Zhang, and Kaiqi Huang.
\newblock Beyond triplet loss: a deep quadruplet network for person
  re-identification.
\newblock In {\em Proceedings of the IEEE conference on computer vision and
  pattern recognition}, pages 403--412, 2017.

\bibitem{cheng2016person}
De Cheng, Yihong Gong, Sanping Zhou, Jinjun Wang, and Nanning Zheng.
\newblock Person re-identification by multi-channel parts-based cnn with
  improved triplet loss function.
\newblock In {\em Proceedings of the iEEE conference on computer vision and
  pattern recognition}, pages 1335--1344, 2016.

\bibitem{devlin2018bert}
Jacob Devlin, Ming-Wei Chang, Kenton Lee, and Kristina Toutanova.
\newblock Bert: Pre-training of deep bidirectional transformers for language
  understanding.
\newblock {\em arXiv preprint arXiv:1810.04805}, 2018.

\bibitem{ml3}
Yueqi Duan, Wenzhao Zheng, Xudong Lin, Jiwen Lu, and Jie Zhou.
\newblock Deep adversarial metric learning.
\newblock In {\em Proceedings of the IEEE Conference on Computer Vision and
  Pattern Recognition}, pages 2780--2789, 2018.

\bibitem{gadzicki2020early}
Konrad Gadzicki, Razieh Khamsehashari, and Christoph Zetzsche.
\newblock Early vs late fusion in multimodal convolutional neural networks.
\newblock In {\em 2020 IEEE 23rd International Conference on Information Fusion
  (FUSION)}, pages 1--6. IEEE, 2020.

\bibitem{gomez2020exploring}
Raul Gomez, Jaume Gibert, Lluis Gomez, and Dimosthenis Karatzas.
\newblock Exploring hate speech detection in multimodal publications.
\newblock In {\em Proceedings of the IEEE/CVF Winter Conference on Applications
  of Computer Vision}, pages 1470--1478, 2020.

\bibitem{hadsell2006dimensionality}
Raia Hadsell, Sumit Chopra, and Yann LeCun.
\newblock Dimensionality reduction by learning an invariant mapping.
\newblock In {\em 2006 IEEE Computer Society Conference on Computer Vision and
  Pattern Recognition (CVPR'06)}, volume~2, pages 1735--1742. IEEE, 2006.

\bibitem{he2016deep}
Kaiming He, Xiangyu Zhang, Shaoqing Ren, and Jian Sun.
\newblock Deep residual learning for image recognition.
\newblock In {\em Proceedings of the IEEE conference on computer vision and
  pattern recognition}, pages 770--778, 2016.

\bibitem{hu2014discriminative}
Junlin Hu, Jiwen Lu, and Yap-Peng Tan.
\newblock Discriminative deep metric learning for face verification in the
  wild.
\newblock In {\em Proceedings of the IEEE conference on computer vision and
  pattern recognition}, pages 1875--1882, 2014.

\bibitem{huang2020better}
Qian Huang, Horace He, Abhay Singh, Yan Zhang, Ser~Nam Lim, and Austin~R
  Benson.
\newblock Better set representations for relational reasoning.
\newblock {\em Advances in Neural Information Processing Systems}, 33, 2020.

\bibitem{jaques2017multimodal}
Natasha Jaques, Sara Taylor, Akane Sano, and Rosalind Picard.
\newblock Multimodal autoencoder: A deep learning approach to filling in
  missing sensor data and enabling better mood prediction.
\newblock In {\em 2017 Seventh International Conference on Affective Computing
  and Intelligent Interaction (ACII)}, pages 202--208. IEEE, 2017.

\bibitem{ml2}
Mahmut Kaya and Hasan~{\c{S}}akir Bilge.
\newblock Deep metric learning: A survey.
\newblock {\em Symmetry}, 11(9):1066, 2019.

\bibitem{khattar2019mvae}
Dhruv Khattar, Jaipal~Singh Goud, Manish Gupta, and Vasudeva Varma.
\newblock Mvae: Multimodal variational autoencoder for fake news detection.
\newblock In {\em The World Wide Web Conference}, pages 2915--2921, 2019.

\bibitem{khosla2020supervised}
Prannay Khosla, Piotr Teterwak, Chen Wang, Aaron Sarna, Yonglong Tian, Phillip
  Isola, Aaron Maschinot, Ce Liu, and Dilip Krishnan.
\newblock Supervised contrastive learning.
\newblock {\em arXiv preprint arXiv:2004.11362}, 2020.

\bibitem{kiela2019supervised}
Douwe Kiela, Suvrat Bhooshan, Hamed Firooz, and Davide Testuggine.
\newblock Supervised multimodal bitransformers for classifying images and text.
\newblock {\em arXiv preprint arXiv:1909.02950}, 2019.

\bibitem{kiela2020hateful}
Douwe Kiela, Hamed Firooz, Aravind Mohan, Vedanuj Goswami, Amanpreet Singh,
  Pratik Ringshia, and Davide Testuggine.
\newblock The hateful memes challenge: Detecting hate speech in multimodal
  memes, 2020.

\bibitem{li2019visualbert}
Liunian~Harold Li, Mark Yatskar, Da Yin, Cho-Jui Hsieh, and Kai-Wei Chang.
\newblock Visualbert: A simple and performant baseline for vision and language.
\newblock {\em arXiv preprint arXiv:1908.03557}, 2019.

\bibitem{ml1}
Zechao Li and Jinhui Tang.
\newblock Weakly supervised deep metric learning for community-contributed
  image retrieval.
\newblock {\em IEEE Transactions on Multimedia}, 17(11):1989--1999, 2015.

\bibitem{liu2009robust}
Wei Liu and Shih-Fu Chang.
\newblock Robust multi-class transductive learning with graphs.
\newblock In {\em 2009 IEEE Conference on Computer Vision and Pattern
  Recognition}, pages 381--388. IEEE, 2009.

\bibitem{lu2019vilbert}
Jiasen Lu, Dhruv Batra, Devi Parikh, and Stefan Lee.
\newblock Vilbert: Pretraining task-agnostic visiolinguistic representations
  for vision-and-language tasks.
\newblock {\em arXiv preprint arXiv:1908.02265}, 2019.

\bibitem{madjarov2012extensive}
Gjorgji Madjarov, Dragi Kocev, Dejan Gjorgjevikj, and Sa{\v{s}}o
  D{\v{z}}eroski.
\newblock An extensive experimental comparison of methods for multi-label
  learning.
\newblock {\em Pattern recognition}, 45(9):3084--3104, 2012.

\bibitem{mai2020modality}
Sijie Mai, Haifeng Hu, and Songlong Xing.
\newblock Modality to modality translation: An adversarial representation
  learning and graph fusion network for multimodal fusion.
\newblock In {\em Proceedings of the AAAI Conference on Artificial
  Intelligence}, volume~34, pages 164--172, 2020.

\bibitem{makhzani2015adversarial}
Alireza Makhzani, Jonathon Shlens, Navdeep Jaitly, Ian Goodfellow, and Brendan
  Frey.
\newblock Adversarial autoencoders.
\newblock {\em arXiv preprint arXiv:1511.05644}, 2015.

\bibitem{minotto2014simultaneous}
Vicente~P Minotto, Claudio~R Jung, and Bowon Lee.
\newblock Simultaneous-speaker voice activity detection and localization using
  mid-fusion of svm and hmms.
\newblock {\em IEEE Transactions on Multimedia}, 16(4):1032--1044, 2014.

\bibitem{musgrave2020metric}
Kevin Musgrave, Serge Belongie, and Ser-Nam Lim.
\newblock A metric learning reality check.
\newblock In {\em European Conference on Computer Vision}, pages 681--699.
  Springer, 2020.

\bibitem{ngiam2011multimodal}
Jiquan Ngiam, Aditya Khosla, Mingyu Kim, Juhan Nam, Honglak Lee, and Andrew~Y
  Ng.
\newblock Multimodal deep learning.
\newblock In {\em ICML}, 2011.

\bibitem{oh2016deep}
Hyun Oh~Song, Yu Xiang, Stefanie Jegelka, and Silvio Savarese.
\newblock Deep metric learning via lifted structured feature embedding.
\newblock In {\em Proceedings of the IEEE conference on computer vision and
  pattern recognition}, pages 4004--4012, 2016.

\bibitem{oviatt2003multimodal}
Sharon Oviatt et~al.
\newblock Multimodal interfaces.
\newblock {\em The human-computer interaction handbook: Fundamentals, evolving
  technologies and emerging applications}, 14:286--304, 2003.

\bibitem{paszke2019pytorch}
Adam Paszke, Sam Gross, Francisco Massa, Adam Lerer, James Bradbury, Gregory
  Chanan, Trevor Killeen, Zeming Lin, Natalia Gimelshein, Luca Antiga, et~al.
\newblock Pytorch: An imperative style, high-performance deep learning library.
\newblock {\em arXiv preprint arXiv:1912.01703}, 2019.

\bibitem{perez2019mfas}
Juan-Manuel P{\'e}rez-R{\'u}a, Valentin Vielzeuf, St{\'e}phane Pateux, Moez
  Baccouche, and Fr{\'e}d{\'e}ric Jurie.
\newblock Mfas: Multimodal fusion architecture search.
\newblock In {\em Proceedings of the IEEE/CVF Conference on Computer Vision and
  Pattern Recognition}, pages 6966--6975, 2019.

\bibitem{plummer2016flickr30k}
Bryan~A. Plummer, Liwei Wang, Chris~M. Cervantes, Juan~C. Caicedo, Julia
  Hockenmaier, and Svetlana Lazebnik.
\newblock Flickr30k entities: Collecting region-to-phrase correspondences for
  richer image-to-sentence models, 2016.

\bibitem{poria2017review}
Soujanya Poria, Erik Cambria, Rajiv Bajpai, and Amir Hussain.
\newblock A review of affective computing: From unimodal analysis to multimodal
  fusion.
\newblock {\em Information Fusion}, 37:98--125, 2017.

\bibitem{reiter2020deep}
Austin Reiter, Menglin Jia, Pu Yang, and Ser-Nam Lim.
\newblock Deep multi-modal sets.
\newblock {\em arXiv preprint arXiv:2003.01607}, 2020.

\bibitem{schroff2015facenet}
Florian Schroff, Dmitry Kalenichenko, and James Philbin.
\newblock Facenet: A unified embedding for face recognition and clustering.
\newblock In {\em Proceedings of the IEEE conference on computer vision and
  pattern recognition}, pages 815--823, 2015.

\bibitem{singh2020mmf}
Amanpreet Singh, Vedanuj Goswami, Vivek Natarajan, Yu Jiang, Xinlei Chen, Meet
  Shah, Marcus Rohrbach, Dhruv Batra, and Devi Parikh.
\newblock Mmf: A multimodal framework for vision and language research, 2020.

\bibitem{singh2020we}
Amanpreet Singh, Vedanuj Goswami, and Devi Parikh.
\newblock Are we pretraining it right? digging deeper into visio-linguistic
  pretraining.
\newblock {\em arXiv preprint arXiv:2004.08744}, 2020.

\bibitem{sohn2016improved}
Kihyuk Sohn.
\newblock Improved deep metric learning with multi-class n-pair loss objective.
\newblock In {\em Proceedings of the 30th International Conference on Neural
  Information Processing Systems}, pages 1857--1865, 2016.

\bibitem{vielzeuf2018centralnet}
Valentin Vielzeuf, Alexis Lechervy, St{\'e}phane Pateux, and Fr{\'e}d{\'e}ric
  Jurie.
\newblock Centralnet: a multilayer approach for multimodal fusion.
\newblock In {\em Proceedings of the European Conference on Computer Vision
  (ECCV) Workshops}, pages 0--0, 2018.

\bibitem{wang2017deep}
Jian Wang, Feng Zhou, Shilei Wen, Xiao Liu, and Yuanqing Lin.
\newblock Deep metric learning with angular loss.
\newblock In {\em Proceedings of the IEEE International Conference on Computer
  Vision}, pages 2593--2601, 2017.

\bibitem{wang2019multi}
Xun Wang, Xintong Han, Weilin Huang, Dengke Dong, and Matthew~R Scott.
\newblock Multi-similarity loss with general pair weighting for deep metric
  learning.
\newblock In {\em Proceedings of the IEEE/CVF Conference on Computer Vision and
  Pattern Recognition}, pages 5022--5030, 2019.

\bibitem{wang2020deep}
Yikai Wang, Wenbing Huang, Fuchun Sun, Tingyang Xu, Yu Rong, and Junzhou Huang.
\newblock Deep multimodal fusion by channel exchanging.
\newblock {\em arXiv preprint arXiv:2011.05005}, 2020.

\bibitem{weinberger2009distance}
Kilian~Q Weinberger and Lawrence~K Saul.
\newblock Distance metric learning for large margin nearest neighbor
  classification.
\newblock {\em Journal of machine learning research}, 10(2), 2009.

\bibitem{xie2019visual}
Ning Xie, Farley Lai, Derek Doran, and Asim Kadav.
\newblock Visual entailment: A novel task for fine-grained image understanding,
  2019.

\bibitem{xu2018powerful}
Keyulu Xu, Weihua Hu, Jure Leskovec, and Stefanie Jegelka.
\newblock How powerful are graph neural networks?
\newblock {\em arXiv preprint arXiv:1810.00826}, 2018.

\bibitem{yu2018correcting}
Baosheng Yu, Tongliang Liu, Mingming Gong, Changxing Ding, and Dacheng Tao.
\newblock Correcting the triplet selection bias for triplet loss.
\newblock In {\em Proceedings of the European Conference on Computer Vision
  (ECCV)}, pages 71--87, 2018.

\bibitem{zamir2018taskonomy}
Amir~R Zamir, Alexander Sax, William Shen, Leonidas~J Guibas, Jitendra Malik,
  and Silvio Savarese.
\newblock Taskonomy: Disentangling task transfer learning.
\newblock In {\em Proceedings of the IEEE conference on computer vision and
  pattern recognition}, pages 3712--3722, 2018.

\bibitem{zhang2019graph}
Haigang Zhang, Shuyi Li, Yihua Shi, and Jinfeng Yang.
\newblock Graph fusion for finger multimodal biometrics.
\newblock {\em IEEE Access}, 7:28607--28615, 2019.

\bibitem{zhang2019deep}
Yan Zhang, Jonathon Hare, and Adam Pr{\"u}gel-Bennett.
\newblock Deep set prediction networks.
\newblock {\em arXiv preprint arXiv:1906.06565}, 2019.

\bibitem{zhang2019fspool}
Yan Zhang, Jonathon Hare, and Adam Pr{\"u}gel-Bennett.
\newblock Fspool: Learning set representations with featurewise sort pooling.
\newblock {\em arXiv preprint arXiv:1906.02795}, 2019.

\bibitem{zhu2017unpaired}
Jun-Yan Zhu, Taesung Park, Phillip Isola, and Alexei~A Efros.
\newblock Unpaired image-to-image translation using cycle-consistent
  adversarial networks.
\newblock In {\em Proceedings of the IEEE international conference on computer
  vision}, pages 2223--2232, 2017.

\end{thebibliography}
}
\end{document}